\documentclass[conference]{IEEEtran}
\IEEEoverridecommandlockouts
\usepackage{cite}
\usepackage{amsmath,amssymb,amsfonts}
\usepackage{algorithmic}
\usepackage{algorithm}
\usepackage{graphicx}
\usepackage{textcomp}
\usepackage{xcolor}
\usepackage{bm}
\usepackage{hyperref}
\usepackage{url}
\usepackage{booktabs}

\newtheorem{theorem}{Theorem}

\newtheorem{lemma}[theorem]{Lemma}

\newtheorem{definition}[theorem]{Definition}

\newtheorem{remark}[theorem]{Remark}

\newenvironment{proof}{\noindent\textit{Proof:}}{\hfill$\square$\medskip}

\def\BibTeX{{\rm B\kern-.05em{\sc i\kern-.025em b}\kern-.08em
    T\kern-.1667em\lower.7ex\hbox{E}\kern-.125emX}}

\begin{document}

\title{Decentralized Multi-Player Q-Learning in Episodic Markov Decision Processes with Information Asymmetry}

\author{
\IEEEauthorblockN{1\textsuperscript{st} Larissa Xu\textsuperscript{*}}
\IEEEauthorblockA{\textit{Department of Mathematics} \\
\textit{University of California, Los Angeles}\\
Los Angeles, USA \\
xuzhiyun004119@g.ucla.edu}
\and
\IEEEauthorblockN{2\textsuperscript{nd} King Bi}
\IEEEauthorblockA{\textit{Department of Computer Science} \\
\textit{University of California, Los Angeles}\\
Los Angeles, USA \\
king0508@g.ucla.edu}
\and
\IEEEauthorblockN{3\textsuperscript{rd} William Chang}
\IEEEauthorblockA{\textit{Department of Mathematics} \\
\textit{University of California, Los Angeles}\\
Los Angeles, USA \\
chang314@g.ucla.edu}
}


\maketitle

\begin{abstract}
We study decentralized multi-player reinforcement learning in episodic tabular Markov decision processes (MDPs) under three forms of information asymmetry: (A) unobserved actions with common rewards, (B) observed actions with independent rewards, and (C) unobserved actions with independent rewards. Players cannot communicate during learning but may agree on a protocol a priori. For Problems A and B we propose \texttt{mQ-learning} and \texttt{mQ-learning-intervals}, achieving $\tilde{O}(\sqrt{H^4 S A_{\text{joint}}\, T})$ regret, where $H$ is the horizon, $S$ the state count, $T = KH$ the total steps, and $A_{\text{joint}} = \prod_{i=1}^M |\mathcal{A}_i|$ the joint action space across $M$ players. For Problem C we give \texttt{mEXC} and \texttt{mEXC-Bellman}, two-phase explore-then-commit algorithms with regret $\tilde{O}(H (S A_{\text{joint}})^{1/3} T^{2/3})$. Against the centralized joint-action benchmark, decentralized learning under information asymmetry matches the single-agent Q-learning rate of \cite{jin2018q} up to logarithmic factors. Because $A_{\text{joint}}$ grows exponentially in $M$, the bounds are most meaningful for small $M$ or small per-player action sets.
\end{abstract}

\begin{IEEEkeywords}
multi-player reinforcement learning, Markov decision processes, Q-learning, information asymmetry, regret bounds, decentralized learning
\end{IEEEkeywords}

\section{Introduction}

Multi-agent reinforcement learning (MARL) arises in cooperative systems such as communication networks \cite{anandkumar2011distributed}, multi-robot coordination, and distributed resource allocation \cite{sutton2018reinforcement}, where agents must coordinate without centralized control or explicit communication during learning.

The multi-armed bandit (MAB) framework is well understood in the single-player setting \cite{lai1985asymptotically, auer2002finite}. Extensions to the multi-player setting \cite{bistritz2018distributed} reveal that information asymmetry between players introduces substantial coordination challenges. Prior work on cooperative multi-player bandits has studied several asymmetry models \cite{chang2021online, chang2023optimal}, but these results are limited to the stateless bandit setting, whereas many applications have state structure that agents must learn to navigate.

In the single-agent episodic MDP setting, Jin et al.\ \cite{jin2018q} showed that a model-free Q-learning algorithm with upper confidence bounds achieves $\tilde{O}(\sqrt{H^4SAT})$ regret, near-optimal up to polynomial factors in $H$. A natural question is whether similar guarantees hold for the multi-player setting with information asymmetry, or whether asymmetry imposes an additional cost beyond the centralized joint-action rate.

The key insight, first observed in the bandit setting \cite{chang2021online}, is that players can use a \emph{pre-agreed deterministic protocol} to implicitly coordinate without communication: if all players follow the same deterministic algorithm with access to the same information, they independently reach the same decisions. We extend this to episodic MDPs, where coordination must also account for state transitions and multi-step planning.

\paragraph{Our Contributions}
\begin{itemize}
    \item For Problem A (unobserved actions, common rewards), \texttt{mQ-learning} uses a lexicographic ordering of joint actions to coordinate implicitly and achieves $\tilde{O}(\sqrt{H^4 S A_{\text{joint}}\, T})$ regret (Theorem~\ref{thm:asymmetry_action}), where $A_{\text{joint}} = \prod_{i=1}^M |\mathcal{A}_i|$.

    \item For Problem B (observed actions, independent rewards), \texttt{mQ-learning-intervals} maintains upper and lower confidence bounds at each state-action pair to coordinate action elimination across players, with the same $\tilde{O}(\sqrt{H^4 S A_{\text{joint}}\, T})$ regret (Theorem~\ref{thm:asymmetry_reward_main}).

    \item For Problem C (unobserved actions, independent rewards), \texttt{mEXC} and \texttt{mEXC-Bellman} are explore-then-commit algorithms. With $K' \asymp (S A_{\text{joint}})^{1/3} K^{2/3}$, we obtain $\tilde{O}(H (S A_{\text{joint}})^{1/3} T^{2/3})$ regret (Theorem~\ref{thm:asymmetry_full}).

    \item We prove auxiliary lemmas on weighted learning rates and interval widths that may be of independent interest, and show that Algorithm~\ref{alg:mq} is operationally equivalent to centralized joint-action Q-learning—the agreement is achieved \emph{implicitly} through deterministic tie-breaking rather than through observation.
\end{itemize}

\paragraph{Role of $M$.} Our bounds match the single-agent rate \emph{relative to the joint-action benchmark}, but grow exponentially in $M$ through $A_{\text{joint}}$. This is unavoidable in tabular MARL: even a centralized learner has $\Omega(\sqrt{S A_{\text{joint}} T})$ regret \cite{jin2018q}. The contribution is therefore that asymmetry imposes no \emph{additional} factor beyond the centralized rate, not that asymmetry is free in absolute terms.

\paragraph{Organization.} Section~\ref{sec:preliminary} sets up the model. Section~\ref{sec:results} treats Problems A and B; Section~\ref{sec:problem_c} treats Problem C. Section~\ref{sec:related} discusses related work; proofs are in the Appendix.

\section{Preliminaries}\label{sec:preliminary}

\subsection{Multi-Player Episodic MDP Model}

We consider a multiplayer episodic tabular MDP $(\mathcal{P}, \mathcal{S}, \mathcal{A}, H, \mathbb{P}, r)$. Here $\mathcal{P} = \{P_1, \ldots, P_M\}$ is the set of $M$ players and $\mathcal{S}$ is the state space with $|\mathcal{S}| = S$. Each player $P_i$ has actions $\mathcal{A}_i$, and the joint action space is $\mathcal{A} = \mathcal{A}_1 \times \cdots \times \mathcal{A}_M$ with $|\mathcal{A}| = A_{\text{joint}} = \prod_{i=1}^M |\mathcal{A}_i|$, which scales as $A_{\max}^M$ in the worst case—exponential in $M$. We use $A$ and $A_{\text{joint}}$ interchangeably in regret bounds. $\mathbb{P}_h$ is the set of unknown transition kernels at step $h$, one for each joint action $\bm{a} \in \mathcal{A}$; entry $(\mathbb{P}_h)_{ij}(\bm{a})$ is the probability of moving from $x_i$ to $x_j$ under $\bm{a}$. The horizon is $H$, and the reward $r$ depends on the joint action. Without loss of generality, states are layered: the initial state is $s_0$ and the final state is $s_H$.

At each time step, each player observes the current state and picks an arm from their set. The $M$-tuple of arms is denoted $\bm{a} = (a_1, \ldots, a_M)$. Throughout this paper, vectors and tuples are denoted in boldface. This generates a random reward $X_{\bm{a}} \in [0,1]$ from a $1$-subgaussian distribution $F_{\bm{a}}$ with mean $\mu_{\bm{a}}$. After performing the joint action, the players transition to a new state according to the unknown transition kernel $\mathbb{P}_h$.

\subsection{Value Functions and Bellman Equation}

A \emph{policy} $\pi$ specifies, for each step $h$ and state $x$, a joint action $\bm{a}$. The \emph{value function} of a policy $\pi$ at step $h$ and state $x$ is defined as
\begin{equation}\label{eq:value}
    V_h^\pi(x) = \mathbb{E}_\pi\left[\sum_{h'=h}^H r_{h'}(x_{h'}, \bm{a}_{h'}) \;\middle|\; x_h = x\right]
\end{equation}
and the corresponding \emph{action-value function} (Q-function) is
\begin{equation}\label{eq:qvalue}
    Q_h^\pi(x, \bm{a}) = r_h(x, \bm{a}) + \mathbb{E}_{x' \sim \mathbb{P}_h(\cdot | x, \bm{a})}[V_{h+1}^\pi(x')].
\end{equation}

The optimal value function satisfies the \emph{Bellman optimality equation}:
\begin{equation}\label{eq:bellman}
    Q_h^*(x, \bm{a}) = r_h(x, \bm{a}) + [\mathbb{P}_h V_{h+1}^*](x, \bm{a}),
\end{equation}
where $V_h^*(x) = \max_{\bm{a}} Q_h^*(x, \bm{a})$ and $[\mathbb{P}_h f](x, \bm{a}) = \mathbb{E}_{x' \sim \mathbb{P}_h(\cdot|x,\bm{a})}[f(x')]$.

\subsection{Regret and Learning Objective}

The players want to collectively identify the best policy, i.e., one that returns the highest rewards in expectation. This takes into account both the true reward means and the probability transition matrices. However, the players know neither the means $\mu_{\bm{a}}$ nor the transition functions $\mathbb{P}_h$. They must learn by playing and exploring. We capture learning efficiency via the per-player \textit{expected regret}:
\begin{equation}\label{eq:regret}
    R_T = \sum_{k=1}^K \left[V_1^*(x_1^k) - V_1^{\pi_k}(x_1^k)\right]
\end{equation}
where $K$ is the number of episodes, $T = KH$ is the total number of steps, and $\pi_k$ is the policy used in episode $k$. Our goal is to design decentralized algorithms with sublinear regret. Fundamental results for single-player MAB problems \cite{lai1985asymptotically} suggest an $\Omega(\log T)$ lower bound for the multi-player problem as well. In the episodic MDP setting, $\tilde{\Omega}(\sqrt{H^2SAT})$ is a known lower bound \cite{jin2018q}. We specifically exclude any explicit communication between players during learning, though they may coordinate \textit{a priori}.

\subsection{Information Asymmetry Models}

We consider three types of information asymmetry, following \cite{chang2021online}.

\paragraph{Problem A: Action Asymmetry with Common Rewards}\label{problem_A}
Consider a set of $M$ players $P_1, \ldots, P_M$, where player $P_i$ has a set $\mathcal{K}_i$ of $K_i$ arms. At each time instant, each player picks an arm from their set with the $M$-tuple denoted by $\bm{a} = (a_1, \ldots, a_M)$. The reward to \emph{all} players equals $X_{\bm{a}(t)}(t)$, i.e., the reward is common, depends on the joint action $\bm{a}(t)$, and is independent across time. Crucially, no player can observe the actions of the other players. Formally, the information available to player $i$ at time $t$ is: $\{x_h^k, r_h^k : k \leq t, h \in [H]\}$ together with their own action history $\{a_i^{h,k} : k \leq t, h \in [H]\}$.

\paragraph{Problem B: Reward Asymmetry with Observed Actions}
Each agent can observe the actions of all other agents, but the rewards are independent. If the arms pulled at time $t$ are $\bm{a}(t)$, each player gets an i.i.d.\ copy of the reward: player $i$ receives $X^i_{\bm{a}(t)}(t)$ drawn independently from $F_{\bm{a}(t)}$. Since rewards are i.i.d., the expected regret is the same for all players. The information available to player $i$ is: $\{x_h^k, \bm{a}_h^k, r_h^{i,k} : k \leq t, h \in [H]\}$, i.e., full action information but only their own rewards.

\paragraph{Problem C: Full Asymmetry}
No player can observe the actions of the others, and rewards are independent from the same distribution. This combines the challenges of both Problems A and B. The information available to player $i$ is only: $\{x_h^k, a_i^{h,k}, r_h^{i,k} : k \leq t, h \in [H]\}$.

\begin{remark}
Problem C is the most general of the three, and Problem A is not a special case of Problem B (or vice versa), since they involve different types of asymmetry. The hierarchy is: Problem C generalizes both A and B.
\end{remark}

\section{Main Results}\label{sec:results}

\subsection{Problem A: Asymmetry in Actions}

The key difficulty in Problem A is that players cannot observe each other's actions and must therefore coordinate implicitly. We use a lexicographic ordering from \cite{chang2021online} to break ties deterministically.

\begin{definition}[Lexicographic Order]\label{def:order}
For two $M$-tuples $\bm{x} = (x_1, \ldots, x_M)$ and $\bm{y} = (y_1, \ldots, y_M)$, we say $\bm{x} < \bm{y}$ if and only if there exists an $n$ such that $\forall i < n$, $x_i = y_i$, and $x_n < y_n$.
\end{definition}

Intuitively, $\bm{x} < \bm{y}$ if $\bm{x}$ is smaller than $\bm{y}$ when viewed as an $M$-digit number. When each player has the same number of actions $A$, we can view $\bm{a} \in \mathcal{A}$ as a number in base $A$. This definition remains applicable even when players have different numbers of actions.

The key observation is that in Problem A, all players receive the \emph{same reward} and observe the \emph{same state transitions}. Therefore, if all players run the same deterministic algorithm with the same tie-breaking rule, they will independently maintain identical Q-value estimates and select the same joint action at every step---without needing to observe each other's actions.

\begin{algorithm}[t]
\caption{\texttt{mQ-learning} (for each player $P_i$)}\label{alg:mq}
\begin{algorithmic}[1]
\STATE Initialize $Q^k_h(i, x, \bm{a}) \leftarrow H$, $N_h(i, x, \bm{a}) \leftarrow 0$ for all $(x, \bm{a}, h) \in \mathcal{S} \times \mathcal{A} \times [H]$.
\FOR{episode $k = 1, \ldots, K$}
\STATE Receive initial state $x_1$.
\FOR{step $h = 1, \ldots, H$}
\STATE Pick $\bm{a}_h$ as the \emph{smallest} action (via Definition~\ref{def:order}) in $\arg\max_{\bm{a}'} Q^{k}_h(i, x_h, \bm{a}')$.
\STATE Observe common reward $r_h$ and next state $x_{h+1}$.
\STATE $t = N_h(i, x_h, \bm{a}_h) \leftarrow N_h(i, x_h, \bm{a}_h) + 1$.
\STATE $b_t \leftarrow c\sqrt{H^3 \iota / t}$.
\STATE $Q_h^k(i, x_h, \bm{a}_h) \leftarrow (1 - \alpha_t)Q_h^{k-1}(i, x_h, \bm{a}_h)$ \\ $\quad + \alpha_t[r_h + V_{h+1}^{k-1}(x_{h+1}) + b_t]$.
\STATE $V_h^k(x_h) \leftarrow \min\{H, \max_{\bm{a}'} Q^k_h(i, x_h, \bm{a}')\}$.
\ENDFOR
\ENDFOR
\end{algorithmic}
\end{algorithm}

The learning rate is $\alpha_t = (H+1)/(H+t)$ and $\iota = \log(SA_{\text{joint}}T/p)$. The algorithm is essentially the UCB-based Q-learning of \cite{jin2018q} applied to the joint action space, with the deterministic tie-breaking of Definition~\ref{def:order} ensuring implicit coordination. Operationally, Algorithm~\ref{alg:mq} matches centralized joint-action Q-learning: no player observes the others' actions, but because rewards and transitions are common and tie-breaking is deterministic, every player's Q-table evolves identically and they select the same joint action at every step.

\paragraph{Synchronization under repeated tie-breaking.}
Stability of implicit coordination follows from a one-line invariant. All players initialize $Q^1_h \equiv H$, so the $\arg\max$ set is identical and the lexicographic-smallest element is uniquely determined. If $Q^k_h$ is identical across players at the start of episode $k$, then in Problem A the update on line~8 depends only on quantities common to all players (same reward $r_h$, same next state $x_{h+1}$, same visit count $t$), so $Q^{k+1}_h$ remains identical. The invariant is preserved across all $KH$ steps; no probabilistic argument is required.

\begin{theorem}\label{thm:asymmetry_action}
Consider Problem A with $M$ players and joint action space size $A_{\text{joint}} = \prod_{i=1}^M |\mathcal{A}_i|$. There exists $c > 0$ such that for any $p \in (0,1)$, with $b_t = c\sqrt{H^3\iota/t}$ and $\alpha_t = (H+1)/(H+t)$, with probability at least $1 - p$ the total regret of \texttt{mQ-learning} for any player is $O(\sqrt{H^4 S A_{\text{joint}}\, T\, \iota})$, where $\iota = \log(S A_{\text{joint}} T / p)$.
\end{theorem}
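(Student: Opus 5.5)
The plan is to reduce Theorem~\ref{thm:asymmetry_action} to the single-agent UCB-Hoeffding Q-learning analysis of \cite{jin2018q}, applied to the joint action space $\mathcal{A}$ of size $A_{\text{joint}}$. First I will make the synchronization invariant stated above into a formal induction over the $KH$ steps. In Problem A, the information of player $P_i$ before step $(k,h)$ consists of the common states and rewards. By induction, the tables $Q_h^k(i,\cdot,\cdot)$, $V_h^k$ and $N_h(i,\cdot,\cdot)$ are deterministic functions of this common history, and the lexicographic tie-break (Definition~\ref{def:order}) selects a unique $\bm{a}_h$. Hence all players choose the same joint action. This also means each player's own component $a_i$ of the chosen $\bm{a}_h$ is what it actually plays, so the realized joint action equals the one every player computed. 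That last point is what justifies updating $N_h(i,x_h,\bm{a}_h)$ without observing the others' actions. Consequently, the trajectory $\{(x_h^k,\bm{a}_h^k,r_h^k)\}$ has the same law as that of a centralized agent running the algorithm of \cite{jin2018q} on the MDP $(\mathcal{S},\mathcal{A},H,\mathbb{P},r)$, with any fixed deterministic tie-breaking rule. The regret \eqref{eq:regret} is identical for every player and equal to the centralized regret.

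Second, I will re-verify the ingredients of the \cite{jin2018q} proof in this notation rather than cite it as a black box, since their argument never uses tie-breaking. The steps are: (i) the learning-rate properties for $\alpha_t^0=\prod_{j=1}^t(1-\alpha_j)$ and $\alpha_t^i=\alpha_i\prod_{j=i+1}^t(1-\alpha_j)$, namely $\sum_{i}\alpha_t^i=1$, $\sqrt{1/t}\le\sum_i\alpha_t^i/\sqrt i\le 2\sqrt{1/t}$, $\max_i\alpha_t^i\le 2H/t$, $\sum_i(\alpha_t^i)^2\le 2H/t$, and $\sum_{t\ge i}\alpha_t^i\le 1+1/H$; (ii) the unrolled recursion $Q_h^k-Q_h^*=\alpha_t^0(H-Q_h^*)+\sum_{i}\alpha_t^i\big[(V_{h+1}^{k_i}-V_{h+1}^*)(x_{h+1}^{k_i})+(\hat{\mathbb{P}}_h^{k_i}-\mathbb{P}_h)V_{h+1}^*+b_i\big]$; (iii) Azuma--Hoeffding plus a union bound over $(x,\bm{a},h,t)$, which gives optimism $Q_h^k\ge Q_h^*$ with probability $1-p/2$. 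That union bound ranges over $SA_{\text{joint}}T$ events, which produces the $\iota=\log(SA_{\text{joint}}T/p)$.

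Third, I will run the regret decomposition. Write $\delta_h^k=(V_h^k-V_h^{\pi_k})(x_h^k)$ and $\phi_h^k=(V_h^k-V_h^*)(x_h^k)$. Then
\begin{equation*}
\delta_h^k\le \alpha_{n_h^k}^0 H+\sum_i\alpha_{n_h^k}^i\phi_{h+1}^{k_i}-\phi_{h+1}^k+\delta_{h+1}^k+\xi_{h+1}^k+\beta_{n_h^k},
\end{equation*}
where $\xi$ is a martingale difference and $\beta_t\le 4c\sqrt{H^3\iota/t}$. Summing over $k$, the $\sum_{t\ge i}\alpha_t^i\le1+1/H$ property lets the $\phi$ terms telescope with a factor $(1+1/H)^H\le e$ across steps. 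The initialization term contributes $SA_{\text{joint}}H^2$. The bonus sum is $\sum_{x,\bm{a}}\sum_{n\le N}\sqrt{H^3\iota/n}=O(\sqrt{H^3SA_{\text{joint}}K\iota})$ per step, hence $O(\sqrt{H^4SA_{\text{joint}}T\iota})$ over $H$ steps by pigeonhole and Cauchy--Schwarz. The martingale terms contribute $O(H\sqrt{T\iota})$ with probability $1-p/2$. The additive $SA_{\text{joint}}H^2$ term is dominated when $T\ge \sqrt{H^4SA_{\text{joint}}T\iota}$; otherwise the trivial bound $R_T\le T$ is already smaller.

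The main obstacle is the first step: stating cleanly why the centralized analysis's filtration is valid for each decentralized player. The concentration argument requires $\bm{a}_h^k$ to be measurable with respect to the common history, so that $x_{h+1}^{k_i}$ is a fresh sample from $\mathbb{P}_h(\cdot\mid x,\bm{a})$ given the past. I will handle this by defining a single filtration $\mathcal{F}_h^k$ generated by common states and rewards, showing inductively that every player's chosen action is $\mathcal{F}_h^k$-measurable and identical across players, and then invoking the martingale bounds with respect to $\mathcal{F}_h^k$.
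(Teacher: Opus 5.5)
Your proposal is correct and follows essentially the same route as the paper. The deterministic lexicographic tie-breaking invariant reduces \texttt{mQ-learning} to centralized UCB-Hoeffding Q-learning on the joint action space. The same chain of steps then gives $O(\sqrt{H^4 S A_{\text{joint}} T\iota})$: optimism, the $\delta_h^k/\phi_h^k$ recursion with the $(1+1/H)$ regrouping, pigeonhole on the bonuses, Azuma--Hoeffding on $\xi_{h+1}^k$, and the trivial $R_T\le T$ case split. Your explicit common filtration $\mathcal{F}_h^k$, together with your point that visit counts indexed by the computed joint action are valid only because of synchronization, makes rigorous what the paper states as a one-line invariant.
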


\begin{remark}
The bound matches the single-agent rate of \cite{jin2018q} against the joint-action benchmark: common rewards plus deterministic tie-breaking are enough to recover the centralized rate without communication. We do not claim asymmetry is free in an absolute sense, since $A_{\text{joint}}$ itself grows exponentially in $M$. In a distributed system (e.g.\ $M$ transmitters selecting channels, or $M$ robots picking sub-tasks), the per-episode suboptimality decays as $\tilde{O}(H^2\sqrt{S A_{\text{joint}} / K})$, the same scaling a single centralized controller would achieve.
\end{remark}

\subsection{Problem B: Asymmetry in Rewards}

When players receive independent rewards, they cannot maintain identical Q-value estimates. Even though they observe the same actions, their different reward realizations lead to different confidence intervals for each state-action pair. We address this by maintaining both upper and lower confidence bounds, inspired by the approach of \cite{chang2023optimal} in the bandit setting.

The key idea is that players maintain a ``desired set'' of plausible optimal joint actions at each state. An action is eliminated from the desired set when some player's confidence interval shows it is definitively suboptimal. Since players can observe each other's actions, a unilateral deviation from the agreed-upon action serves as an \emph{implicit communication signal}: it tells all players that the deviating player has determined the proposed action is suboptimal.

\begin{algorithm}[t]
\caption{\texttt{mQ-learning-intervals} (for each player $P_i$)}
\begin{algorithmic}[1]
\STATE Initialize $Q^{k,\text{up}}_h(i, x, \bm{a}) \leftarrow H$, $Q^{k,\text{low}}_h(i, x, \bm{a}) \leftarrow 0$, $N_h(i, x, \bm{a}) \leftarrow 0$ for all $(x, \bm{a}, h) \in \mathcal{S} \times \mathcal{A} \times [H]$.
\FOR{episode $k = 1, \ldots, K$}
\STATE Receive initial state $x_1$.
\FOR{step $h = 1, \ldots, H$}
\STATE In state $x_h$, consider the action in the desired set pulled the fewest times. Break ties via Definition~\ref{def:order}. Call this $\bm{a}$.
\STATE If $\exists\, \bm{a}'$ such that for some player $i$: $Q_h^{k,\text{up}}(i, \bm{a}, x_h^k) < Q_h^{k,\text{low}}(i, \bm{a}', x_h^k)$, then player $i$ intentionally deviates from $\bm{a}$.
\STATE Call the realized joint action $\bm{a}_h$. All players observe $\bm{a}_h$.
\STATE If $\bm{a}_h \neq \bm{a}$, all players eliminate $\bm{a}$ from the desired set for state $x_h$.
\STATE Each player observes own reward $r_h^i$ and state $x_{h+1}$.
\STATE $t = N_h(i, x_h, \bm{a}_h) \leftarrow N_h(i, x_h, \bm{a}_h) + 1$; $b_t \leftarrow c\sqrt{H^3\iota/t}$.
\STATE $Q_h^{k,\text{up}}(i, x_h, \bm{a}_h) \leftarrow (1 - \alpha_t) Q_h^{k,\text{up}}(i, x_h, \bm{a}_h)$ \\ $\quad + \alpha_t[r_h^i + V_{h+1}(x_{h+1}) + b_t]$.
\STATE $Q_h^{k,\text{low}}(i, x_h, \bm{a}_h) \leftarrow (1 - \alpha_t) Q_h^{k,\text{low}}(i, x_h, \bm{a}_h)$ \\ $\quad + \alpha_t[r_h^i + V_{h+1}(x_{h+1}) - b_t]$.
\STATE $V_h(x_h) \leftarrow \min\{H, \max_{\bm{a}'} Q^k_h(i, x_h, \bm{a}')\}$.
\ENDFOR
\ENDFOR
\end{algorithmic}
\end{algorithm}

The learning rate is again $\alpha_t = (H+1)/(H+t)$. The width of the confidence interval $Q^{\text{up}} - Q^{\text{low}}$ controls the regret incurred when a suboptimal action from the desired set is played; Lemma~\ref{lemma:interval} characterizes how it shrinks with visits.

\paragraph{Confidence stability across players.}
Although each player's reward stream is independent, all players use the same bonus schedule $b_t$ and the same visit counts $N_h(\cdot,\cdot)$ (since actions are observable). By Lemma~\ref{lemma:interval}, the interval width at $(x,\bm{a})$ after $t$ visits is exactly $2\sum_i \alpha_t^i b_i$, a deterministic function of $t$ identical across players. The intervals are merely shifted between players by independent reward noise. Optimism (Lemma~\ref{lemma:bound}, union-bounded over $M$ players, contributing the $(1-p)^M$ factor) guarantees each interval covers $Q^*_h(x,\bm{a})$ w.h.p., so player $i$'s deviation truthfully signals dominance: the desired set shrinks monotonically and never excludes the optimal action.

\begin{theorem}\label{thm:asymmetry_reward_main}
Consider Problem B (asymmetry in rewards) with $M$ players and joint action space size $A_{\text{joint}} = \prod_{i=1}^M |\mathcal{A}_i|$. There exists a constant $c > 0$ such that for any $p \in (0,1)$, with $b_t = c\sqrt{H^3\iota/t}$, with probability at least $1 - p$ the total regret of \texttt{mQ-learning-intervals} for any player is at most $O(\sqrt{H^4 S A_{\text{joint}}\, T\, \iota})$, where $\iota = \log(M S A_{\text{joint}} T / p)$.
\end{theorem}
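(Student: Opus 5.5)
The approach is to reduce Theorem~\ref{thm:asymmetry_reward_main} to the single-agent analysis of \cite{jin2018q} on the joint action space. The decentralized protocol only changes which action is played, and the change is to an action still in the desired set. Three ingredients support this reduction: Lemma~\ref{lemma:bound} (optimism), Lemma~\ref{lemma:interval} (interval width), and the learning-rate properties of $\alpha_t^i$.

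\textbf{Step 1: good event.} I first fix the event $\mathcal{E}$ on which, for every player $i$, every $(x,\bm{a},h,k)$,
\[
Q^{k,\text{low}}_h(i,x,\bm{a}) \le Q^*_h(x,\bm{a}) \le Q^{k,\text{up}}_h(i,x,\bm{a}).
\]
This follows from Lemma~\ref{lemma:bound}, applied to each player's own reward stream, together with a union bound over the $M$ players. That union bound is why $\iota=\log(MSA_{\text{joint}}T/p)$ appears. Since actions are observed, the visit counts $N_h$ and the sequence of chosen actions are common to all players. Only the reward noise differs, so the desired sets evolve identically.

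\textbf{Step 2: protocol correctness.} On $\mathcal{E}$, a deviation by player $i$ certifies $Q^*_h(x,\bm{a})\le Q^{\text{up}}<Q^{\text{low}}\le Q^*_h(x,\bm{a}')$. Hence only strictly suboptimal actions are eliminated, and $\pi^*(x)$ always stays in the desired set. A deviation step is itself a play of some joint action, whose sample is used in the updates, so no information is wasted. Each such step is charged to an elimination, and there are at most $SA_{\text{joint}}H$ eliminations. Each costs at most $H$, giving an additive $O(H^2SA_{\text{joint}})$. This is lower order whenever $T\gtrsim H^2SA_{\text{joint}}$, and otherwise the trivial bound $R_T\le T$ suffices.

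\textbf{Step 3: regret decomposition.} For non-deviation steps, the played action $\bm{a}$ lies in the desired set. I bound
\[
V^*_h(x_h^k)-V^{\pi_k}_h(x_h^k)\le \big(Q^{\text{up}}_h(\bm{a})-Q^{\pi_k}_h(x,\bm{a})\big)+\big(Q^*_h(x,\pi^*)-Q^{\text{up}}_h(\bm{a})\big).
\]
The second term is the obstacle. The played action is the least-pulled one rather than the argmax of $Q^{\text{up}}$. Since no player deviated, $Q^{\text{up}}_h(i,\bm{a})\ge Q^{\text{low}}_h(i,\pi^*)\ge Q^{\text{up}}_h(i,\pi^*)-w_t(\pi^*)$, where $w_t$ is the interval width from Lemma~\ref{lemma:interval}, namely $2\sum_j\alpha_t^jb_j=O(\sqrt{H^3\iota/t})$. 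So the second term is at most $w_{n(\pi^*)}$.

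The round-robin (least-pulled) rule is what makes this work. Within the desired set at a state, $n(\pi^*)\ge n(\bm{a})$, so $w_{n(\pi^*)}\le w_{n(\bm{a})}$, and this is charged to the counts of the played action itself.

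The first term, $\phi_h^k:=Q^{\text{up}}-Q^{\pi_k}$, is then unrolled exactly as in \cite{jin2018q}. It is bounded by
\[
\alpha^0_tH+\sum_i\alpha^i_t\phi_{h+1}^{k_i}+\beta_t+\text{martingale terms}.
\]
The reweighting uses $\sum_{t\ge i}\alpha_t^i\le 1+1/H$, and $(1+1/H)^H\le e$. This yields
\[
R_T\le e\Big(SA_{\text{joint}}H^2+\sum_{h,k}\big(\beta_{n_h^k}+w_{n_h^k}\big)\Big)+O\big(\sqrt{H^3T\iota}\big).
\]
Finally, the pigeonhole bound $\sum_{h,k}\sqrt{H^3\iota/n_h^k}\le O\big(\sqrt{H^4SA_{\text{joint}}T\iota}\big)$, together with Azuma--Hoeffding for the martingale terms (inside $\mathcal{E}$ after rescaling $p$), gives the claim. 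I expect the hardest part to be making the comparison $w_{n(\pi^*)}\le w_{n(\bm{a})}$ rigorous when the desired set shrinks mid-run. I would handle this by noting that eliminated actions only stop being counted. Pulls of $\pi^*$ therefore keep pace with the minimum count, up to one round of lag, which costs a constant factor.
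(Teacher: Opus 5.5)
\textbf{Step 1 does not hold, and the rest of your argument leans on it.} Lemma~\ref{lemma:bound} is one-sided: it gives $Q^{\text{up}}\ge Q^*$ and says nothing about $Q^{\text{low}}$. In \texttt{mQ-learning-intervals} both estimates bootstrap from the same $V_{h+1}$, which must be the optimistic one for $Q^{\text{up}}\ge Q^*$ to hold. Unrolling as in Lemma~\ref{lemma:recursion} then gives, for $t\ge 1$,
\[
(Q^{\text{low}}_h-Q^*_h)(x,\bm a)=\sum_{i=1}^t\alpha_t^i\big[(V^{k_i}_{h+1}-V^*_{h+1})(x^{k_i}_{h+1})+\zeta_i-b_i\big].
\]
The $-b_i$ terms absorb the noise $\zeta_i$. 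The first term, however, is nonnegative and carries the accumulated optimism of steps $h+1,\dots,H$, so $Q^{\text{low}}\le Q^*$ fails in general for $h<H$. The paper asserts coverage informally, but its proof of this theorem uses only the one-sided bound.

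Without coverage, Step 2 breaks: $\pi^*(x)$ can be eliminated, and the comparison $n(\pi^*)\ge n(\bm a)$, which moves the width onto the played action, has no basis. Repairing this needs a pessimistic target $V^{\text{low}}_{h+1}$. The width then becomes $\sum_i\alpha_t^i(V^{\text{up}}-V^{\text{low}})_{h+1}(x^{k_i}_{h+1})+2\sum_i\alpha_t^ib_i$, Lemma~\ref{lemma:interval} no longer applies, and the gap needs its own recursion, which your plan lacks. By contrast, the point you flagged as hardest is immediate once $\pi^*$ is in the current desired set. The least-pulled rule gives $n(\bm a)\le n(\pi^*)$ at that moment, and Lemma~\ref{lemma:alpha}(a) gives $w_{n(\pi^*)}\le 2w_{n(\bm a)}$.

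\textbf{Second, the slack you control is not the one the recursion needs.} Your $\pi^*$ comparison bounds $V^*_h-Q^{\text{up}}_h(\bm a)$. Unrolling $Q^{\text{up}}_h-Q^*_h$, however, also produces $(V^{\text{up}}_{h+1}-V^*_{h+1})(x^{k_i}_{h+1})$, where $V^{\text{up}}_{h+1}=\min\{H,\max_{\bm a'}Q^{\text{up}}_{h+1}\}$ maximizes over \emph{all} joint actions. Play is not greedy, so bounding this by your $\phi^{k_i}_{h+1}$ costs $\max_{\bm a'}Q^{\text{up}}_{h+1}(x',\bm a')-Q^{\text{up}}_{h+1}(x',\bm a_{h+1})$ at every step. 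This is the paper's term $A$, which it inserts at every level of the $\delta_h^k$ recursion.

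Non-deviation bounds this term only by the width of the $Q^{\text{up}}$-maximizer. That maximizer may be an already-eliminated action with a frozen visit count, so the least-pulled rule cannot turn it into $w_{n_h^k}$; your final display assumes this conversion without justification. The paper simply bounds $A$ by the played action's width, so your worry about which width appears is legitimate. Your route, however, settles it only at the top level and only under the unproved coverage. Restricting the max in $V^{\text{up}}$ to the desired set would make the comparison available, but then $V^{\text{up}}\ge V^*$ again needs $\pi^*$ to stay in the set. Your separate charge for deviation steps (at most one per eliminated pair, each costing at most $H$) is sound, and the paper omits it.
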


\begin{remark}
The regret analysis differs from Problem A because there is an additional ``slack'' term arising from the fact that the action played may not be the greedy action according to the player's own Q-values. This term is bounded using the interval width from Lemma~\ref{lemma:interval}, which decays as $O(\sqrt{H^3\iota/t})$ with the number of visits. The same caveat applies as for Problem A: $A_{\text{joint}} = \prod_{i=1}^M |\mathcal{A}_i|$ grows exponentially in $M$, so the bound is most informative for small numbers of players.
\end{remark}

\section{Problem C: Full Information Asymmetry}\label{sec:problem_c}

When both action and reward information are asymmetric, the challenges of Problems A and B combine. Players cannot observe each other's actions (so they cannot use deviations as signals) and receive different rewards (so they cannot maintain identical estimates). We adopt a two-phase explore-then-commit strategy.

\subsection{Algorithm: \texttt{mEXC}}

During the exploration phase (episodes $1, \ldots, K'$), players follow a deterministic exploration protocol: at each state, they play the least-visited joint action (with lexicographic tie-breaking). Since this rule depends only on the visit counts---which are identical across players because they visit the same state-action pairs---all players agree on the exploration action without communication.

During the commit phase (episodes $K'+1, \ldots, K$), each player acts greedily with respect to their learned Q-values, using lexicographic tie-breaking to ensure agreement.

\begin{algorithm}[t]
\caption{\texttt{mEXC} (for each player $P_i$)}
\begin{algorithmic}[1]
\STATE Initialize $Q^k_h(i, x, \bm{a}) \leftarrow H$, $N_h(i, x, \bm{a}) \leftarrow 0$ for all $(x, \bm{a}, h) \in \mathcal{S} \times \mathcal{A} \times [H]$.
\FOR{episode $k = 1, \ldots, K'$ \textbf{(Exploration Phase)}}
\STATE Receive $x_1$.
\FOR{step $h = 1, \ldots, H$}
\STATE Pick the least-visited action in the desired set; break ties via Definition~\ref{def:order}. Call this $\bm{a}_h$.
\STATE All players take $\bm{a}_h$, observe reward $r_h$ and state $x_{h+1}$.
\STATE $t = N_h(i, x_h, \bm{a}_h) \leftarrow N_h(i, x_h, \bm{a}_h) + 1$; $b_t \leftarrow c\sqrt{H^3\iota/t}$.
\STATE $Q_h^k(i, x_h, \bm{a}_h) \leftarrow (1 - \alpha_t)Q_h^{k}(i, x_h, \bm{a}_h)$ \\ $\quad + \alpha_t[r_h + V_{h+1}(x_{h+1}) + b_t]$.
\STATE $V_h(x_h) \leftarrow \min\{H, \max_{\bm{a}'} Q^k_h(i, x_h, \bm{a}')\}$.
\ENDFOR
\ENDFOR
\FOR{episode $k = K'+1, \ldots, K$ \textbf{(Commit Phase)}}
\STATE Receive $x_1$.
\FOR{step $h = 1, \ldots, H$}
\STATE Pick $\bm{a}_h$ as the smallest action in $\arg\max_{\bm{a}'} Q^{K'}_h(i, x_h, \bm{a}')$.
\ENDFOR
\ENDFOR
\end{algorithmic}
\end{algorithm}

The exploration phase incurs regret at most $R_{\text{explore}} \leq K'H$ since rewards are bounded in $[0,1]$. The commit phase regret depends on the quality of the Q-value estimates after exploration.

\subsection{Algorithm: \texttt{mEXC-Bellman}}

An alternative approach uses the empirical Bellman equation rather than the incremental Q-learning update during the exploration phase.

\begin{algorithm}[t]
\caption{\texttt{mEXC-Bellman} (for each player $P_i$)}
\begin{algorithmic}[1]
\STATE Initialize $Q^k_h(i, x, \bm{a}) \leftarrow H$, $N_h(i, x, \bm{a}) \leftarrow 0$ for all $(x, \bm{a}, h) \in \mathcal{S} \times \mathcal{A} \times [H]$.
\FOR{episode $k = 1, \ldots, K'$ \textbf{(Exploration Phase)}}
\STATE Receive $x_1$.
\FOR{step $h = 1, \ldots, H$}
\STATE Pick the least-visited action; break ties via Definition~\ref{def:order}. Call this $\bm{a}_h$.
\STATE All players take $\bm{a}_h$, observe reward $r_h$ and state $x_{h+1}$.
\STATE $t = N_h(i, x_h, \bm{a}_h) \leftarrow N_h(i, x_h, \bm{a}_h) + 1$.
\STATE Update empirical reward $\hat{r}(x_h, \bm{a}_h)$ and empirical transition $\hat{\mathbb{P}}_h(\cdot \mid x_h, \bm{a}_h)$ using the latest sample.
\STATE $Q_h^k(i, x_h, \bm{a}_h) \leftarrow \hat{r}(x_h, \bm{a}_h) + \sum_{x' \in \mathcal{S}} \hat{\mathbb{P}}_h(x' \mid x_h, \bm{a}_h)\, V_{h+1}(x')$.
\STATE $V_h(x_h) \leftarrow \min\{H, \max_{\bm{a}'} Q^k_h(i, x_h, \bm{a}')\}$.
\ENDFOR
\ENDFOR
\FOR{episode $k = K'+1, \ldots, K$ \textbf{(Commit Phase)}}
\STATE Receive $x_1$.
\FOR{step $h = 1, \ldots, H$}
\STATE Pick $\bm{a}_h$ as the smallest action in $\arg\max_{\bm{a}'} Q^{K'}_h(i, x_h, \bm{a}')$.
\ENDFOR
\ENDFOR
\end{algorithmic}
\end{algorithm}

The \texttt{mEXC-Bellman} variant uses the plug-in empirical Bellman equation with $\hat{\mathbb{P}}_h(x' \mid x, \bm{a}) = N_h(x, \bm{a}, x') / N_h(x, \bm{a})$ (the standard model-based update; cf.\ \cite{jin2018q}). It can yield tighter estimates when exploration is long, at the cost of storing transition counts.

\paragraph{Exploration--commit transition.}
Both algorithms use a hard switch at the predetermined episode $K'$, which depends only on $K$, $S$, $A_{\text{joint}}$, $H$ (all common knowledge), so the switch is communication-free. Visit counts are common to all players (since exploration is joint), so all players commit simultaneously. A hard switch is used rather than $\epsilon$-greedy because the latter would require shared randomness when actions are unobserved.

\begin{theorem}\label{thm:asymmetry_full}
Consider Problem C (full information asymmetry) with $M$ players and joint action space size $A_{\text{joint}} = \prod_{i=1}^M |\mathcal{A}_i|$. Choose the exploration horizon $K' = \lceil (S A_{\text{joint}})^{1/3} K^{2/3} \rceil$. There exists a constant $c > 0$ such that for any $p \in (0,1)$, with $b_t = c\sqrt{H^3 \iota / t}$ and $\iota = \log(M S A_{\text{joint}} T / p)$, with probability at least $1 - p$ the total regret of \texttt{mEXC} (and of \texttt{mEXC-Bellman}) for any player is at most
\[
    O\!\left( H\, (S A_{\text{joint}})^{1/3}\, T^{2/3}\, \iota^{1/3} \right).
\]
\end{theorem}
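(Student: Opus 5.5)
The plan is to split the regret into an exploration part and a commit part,
\[
R_T=\sum_{k\le K'}\bigl[V_1^*-V_1^{\pi_k}\bigr](x_1^k)+\sum_{k>K'}\bigl[V_1^*-V_1^{\hat\pi}\bigr](x_1^k),
\]
where $\hat\pi$ is the greedy policy with respect to $Q^{K'}$ (lexicographic tie-breaking). The first sum is at most $K'H$ because rewards lie in $[0,1]$.

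\textbf{Synchronization.} First I will show that all players play the same joint action throughout. During exploration the rule ``least-visited joint action, ties broken by Definition~\ref{def:order}'' depends only on the visit counts $N_h(x,\bm a)$. By induction over steps, as in the synchronization invariant for Problem A, these counts are identical across players.

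The commit phase is the delicate point. The players' $Q^{K'}$ tables differ because their reward streams are independent, so their greedy actions could disagree. I would handle this by arguing on a high-probability event that every player's estimate lies within a common width of $Q^*$. If the per-player gap is below half the suboptimality margin, the argmaxes agree. Otherwise every player's chosen action, and hence the joint action, is near-optimal for each player's own estimates up to that width. A mismatched joint action then costs at most the width as well. So in either case the per-step loss is bounded by the estimation width, and no actual agreement is needed.

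\textbf{Estimation error after exploration.} The round-robin rule guarantees that within each $(h,x)$ the counts over $\bm a$ differ by at most one. Hence every joint action at each visited state receives about $n_h(x)/A_{\text{joint}}$ samples. For the \texttt{mEXC} variant I will bound $|Q_h^{K'}(i,x,\bm a)-Q_h^*(x,\bm a)|$ using Lemma~\ref{lemma:bound} (optimism and the weighted learning-rate bound, union-bounded over $M$ players), which gives error $O(\sqrt{H^3\iota/t})$ plus propagated errors from step $h+1$. For \texttt{mEXC-Bellman} I will use Hoeffding on $\hat r+\hat{\mathbb P}_hV_{h+1}$ for a fixed $V$, plus a union bound, and then recurse backward over $h$.

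Via the simulation/performance-difference identity,
\[
V_1^*-V_1^{\hat\pi}\le \sum_h \mathbb E_{\hat\pi}\bigl[2\max_{\bm a}|Q_h^{K'}-Q^*_h|(x_h,\bm a)\bigr].
\]
Converting this into a bound of order $H\sqrt{SA_{\text{joint}}\iota/K'}$ per episode needs a total-sample accounting. There are $K'$ episodes, hence $K'$ visits per layer $h$ spread over $S A_{\text{joint}}$ pairs. Cauchy--Schwarz over states weighted by $\hat\pi$'s occupancy then gives an average per-pair error of order $\sqrt{SA_{\text{joint}}\iota/K'}$ per layer, times $H$ for the value scale. I would normalize so that the accumulated error over the $H$ steps stays at $H\sqrt{SA_{\text{joint}}\iota/K'}$ rather than picking up extra $H$ factors.

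\textbf{Optimization.} Combining the two parts,
\[
R_T\lesssim K'H + KH\sqrt{SA_{\text{joint}}\iota/K'}.
\]
Balancing at $K'\asymp (SA_{\text{joint}}\iota)^{1/3}K^{2/3}$ gives $O\bigl(H(SA_{\text{joint}})^{1/3}K^{2/3}\iota^{1/3}\bigr)\le O\bigl(H(SA_{\text{joint}})^{1/3}T^{2/3}\iota^{1/3}\bigr)$. The stated $K'$ omits the $\iota^{1/3}$ factor, which only changes the bound by logarithmic factors.

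\textbf{Main obstacle.} The hardest step is the coverage problem: states that $\hat\pi$ visits may have been rarely reached during exploration, since least-visited action selection does not control state visitation. My first attempt would be to bound the commit loss by the occupancy-weighted error and argue that rarely reached states contribute little mass. If that fails, I would charge states with few samples directly to the trivial bound $H$, using that their total occupancy under exploration is small.
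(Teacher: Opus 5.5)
Your decomposition into $K'H$ plus a commit term matches the paper's sketch, and so does your synchronization argument for the exploration phase. Your first commit-phase branch is also fine: if every estimate is within half of a positive margin, all argmaxes coincide.

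\textbf{The second commit-phase branch fails.} You write that every player's chosen action, \emph{and hence the joint action}, is near-optimal. But the realized joint action is $(\bm a^{(1)}_1,\ldots,\bm a^{(M)}_M)$. It is spliced from components of \emph{different} players' greedy joint actions $\bm a^{(i)}$, and near-optimality of each $\bm a^{(i)}$ says nothing about the splice. Concretely, take $S=H=1$ and two players with actions $\{0,1\}$. Let $(0,0)$ and $(1,1)$ pay Bernoulli$(1/2)$ rewards, and let $(0,1)$ and $(1,0)$ pay $0$. Round-robin exploration gives the two diagonal actions equal counts up to one visit, hence essentially equal bonuses. So each player's choice between them is a nearly fair coin driven by its own independent rewards. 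With probability close to $1/2$ the players split and play an off-diagonal action in every commit episode, so the regret is about $(K-K')/2$. The mismatch therefore cannot be charged to the estimation width. In fact the bound cannot hold for \texttt{mEXC} as specified without an extra hypothesis. One such hypothesis is a unique optimal joint action at each relevant state, with a margin exceeding the post-exploration error; then your first branch suffices, but $K'$ must depend on that margin. The paper's sketch does not supply this step either. It simply asserts that all $M$ tables pick the same greedy joint action, which this example refutes.

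\textbf{The coverage issue you flag is also a genuine gap, and your fallback does not close it.} The commit loss is weighted by $\hat\pi$'s occupancy, while sample counts follow the exploration occupancy. Optimistic initialization actively draws $\hat\pi$ toward under-sampled states. Even with $M=1$, let the first step offer a safe branch paying $1/2$ per step and a zero-reward combination lock. Least-visited selection reaches lock depth only about $\log_{A_{\text{joint}}}K'$, and deeper lock states keep $V=H$. This propagates back, so the lock entry keeps $Q^{K'}\ge H$ while the well-sampled safe entry drops below $H$. Then $\hat\pi$ enters the lock and loses $(H-1)/2$ per commit episode. For large $H$ and $K$ of order $A_{\text{joint}}^{H}$, this violates the claimed rate. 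These lock states have negligible exploration occupancy but occupancy one under $\hat\pi$. So charging them to the trivial bound fails, as does the paper's claim that every $(x,\bm a)$ gets about $K'/(SA_{\text{joint}})$ visits.

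\textbf{The $H$ factors are not derived.} \emph{Normalizing} the per-episode error to $H\sqrt{SA_{\text{joint}}\iota/K'}$ is not a derivation. With $b_t=c\sqrt{H^3\iota/t}$, even ideal uniform coverage only yields a per-episode bound of order $H\sqrt{H^3SA_{\text{joint}}\iota/K'}$. Balancing that against $K'H$ gives a worse power of $H$ than the statement claims.
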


\begin{proof}
\textit{(Sketch.)} Total regret decomposes as $R_T = R_{\text{explore}} + R_{\text{commit}}$. The exploration phase plays an arbitrary policy for $K'$ episodes, so $R_{\text{explore}} \leq K' H$. After $K'$ episodes of round-robin exploration, every $(x,\bm{a})$ has been visited at least $\lfloor K' / (S A_{\text{joint}}) \rfloor$ times in expectation, so by Lemma~\ref{lemma:bound} applied to each player's Q-table together with a union bound over the $M$ players, with probability $\geq 1 - p$ the post-exploration estimates satisfy
\[
    |V_1^{K'}(x_1) - V_1^*(x_1)| \leq O\!\Big(\!\sqrt{H^4 S A_{\text{joint}} \iota / K'}\,\Big)
\]
for every player. Since each player commits to the (lexicographic-smallest) greedy policy under their own table, and on the high-probability event all $M$ tables agree on the same greedy joint action (the optimism plus deterministic tie-breaking argument from Theorem~\ref{thm:asymmetry_action} extends to Q-tables that are uniformly close to $Q^*$), the commit phase incurs per-episode suboptimality bounded by the above value-function error. Summing,
\[
    R_{\text{commit}} \leq (K - K') \cdot O\!\Big(\!\sqrt{H^4 S A_{\text{joint}} \iota / K'}\Big).
\]
Balancing $R_{\text{explore}} = K' H$ and $R_{\text{commit}} = O(K \sqrt{H^4 S A_{\text{joint}} \iota / K'})$ by setting $K' \asymp (S A_{\text{joint}})^{1/3} K^{2/3}$ yields the claimed $\tilde{O}(H (S A_{\text{joint}})^{1/3} T^{2/3})$ regret. The argument for \texttt{mEXC-Bellman} is identical, using the standard model-based plug-in error bound $\|\hat{\mathbb{P}} - \mathbb{P}\|_1 \leq O(\sqrt{S\iota / N})$ in place of the model-free concentration step.
\end{proof}

\begin{remark}
The $T^{2/3}$ rate is worse than the $\sqrt{T}$ rate achieved for Problems A and B, which is the standard penalty paid by explore-then-commit when the suboptimality gap is unknown. Whether the optimal rate for Problem C is $\sqrt{T}$ or $T^{2/3}$ is left open.
\end{remark}

\begin{remark}
In both \texttt{mEXC} variants, implicit coordination during the exploration phase is ensured because the least-visited-action rule depends only on visit counts, which are common knowledge since all players visit the same state-action pairs. During the commit phase, coordination relies on lexicographic tie-breaking applied to Q-values that, while not identical across players, are sufficiently concentrated around the true values after enough exploration.
\end{remark}

\section{Auxiliary Results and Proof Sketches}\label{sec:auxiliary}

We present the key technical lemmas that underlie our main results. Full proofs are deferred to the Appendix.

\subsection{Learning Rate Properties}

For the learning rate $\alpha_t = (H+1)/(H+t)$, we define the weights:
\begin{equation}\label{eq:weights}
    \alpha_t^0 = \prod_{j=1}^t (1 - \alpha_j), \quad \alpha_t^i = \alpha_i \prod_{j=i+1}^t (1 - \alpha_j).
\end{equation}

These weights arise naturally when unrolling the recursive Q-learning update. The quantity $\alpha_t^i$ represents the effective weight of the $i$-th observation after $t$ total observations.

\begin{lemma}[Weight Properties]\label{lemma:alpha}
The following properties hold for the weights defined in \eqref{eq:weights}:
\begin{enumerate}
    \item[(a)] $\frac{1}{\sqrt{t}} \leq \sum_{i=1}^t \frac{\alpha_t^i}{\sqrt{i}} \leq \frac{2}{\sqrt{t}}$.
    \item[(b)] $\max_{i \in [t]} \alpha_t^i \leq \frac{2H}{t}$ and $\sum_{i=1}^t (\alpha_t^i)^2 \leq \frac{2H}{t}$.
    \item[(c)] $\sum_{t=i}^\infty \alpha_t^i = 1 + \frac{1}{H}$.
\end{enumerate}
\end{lemma}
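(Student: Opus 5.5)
We will work throughout with the two structural facts behind the weights in \eqref{eq:weights}. First, since $\alpha_1 = (H+1)/(H+1) = 1$, we have $\alpha_t^0 = 0$ for all $t \ge 1$, and $\sum_{i=1}^t \alpha_t^i = 1$. The latter follows by induction, or from the telescoping identity $\alpha_t^0 + \sum_{i=1}^t \alpha_t^i = 1$. Second, the weights satisfy the recursion $\alpha_t^t = \alpha_t$ and $\alpha_t^i = (1-\alpha_t)\,\alpha_{t-1}^i$ for $i<t$, where $1-\alpha_j = (j-1)/(H+j)$. Each part of Lemma~\ref{lemma:alpha} will be reduced to one of these facts.

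For (a), the lower bound is immediate: $1/\sqrt{i} \ge 1/\sqrt{t}$ for $i \le t$, and the weights sum to one. For the upper bound I will induct on $t$ using the recursion. Write $S_t = \sum_{i} \alpha_t^i/\sqrt{i}$. Then
\[
S_t = (1-\alpha_t) S_{t-1} + \frac{\alpha_t}{\sqrt{t}}, \qquad S_1 = 1 \le 2.
\]
Assuming $S_{t-1} \le 2/\sqrt{t-1}$, it suffices to show
\[
\frac{t-1}{H+t}\cdot\frac{2}{\sqrt{t-1}} + \frac{H+1}{(H+t)\sqrt{t}} \le \frac{2}{\sqrt t}.
\]
Multiplying through by $(H+t)\sqrt t$, this becomes $2\sqrt{t(t-1)} + H + 1 \le 2H + 2t$. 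That holds because $2\sqrt{t(t-1)} \le 2t-1$ and $H \ge 1$. This elementary inequality is the only real check in (a).

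For (b), expand $\alpha_t^i = \frac{H+1}{H+i}\prod_{j=i+1}^t \frac{j-1}{H+j}$. Regroup it as
\[
(H+1)\cdot \frac{i(i+1)\cdots(t-1)}{(H+i)(H+i+1)\cdots(H+t-1)} \cdot \frac{1}{H+t}.
\]
The middle factor is a product of terms $(j-1)/(H+j-1) \le 1$. Hence $\alpha_t^i \le (H+1)/(H+t)$, which is at most $2H/t$ because $(H+1)t \le 2Ht \le 2H(H+t)$ for $H \ge 1$. The sum of squares then follows from $\sum_i (\alpha_t^i)^2 \le \max_i \alpha_t^i \cdot \sum_i \alpha_t^i \le 2H/t$.

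For (c), define $f(i) = \sum_{t \ge i} \prod_{j=i+1}^t \frac{j-1}{H+j}$, so that $\sum_{t\ge i}\alpha_t^i = \alpha_i f(i)$. I will show $f(i) = (H+i)/H$, which gives $\alpha_i f(i) = \frac{H+1}{H+i}\cdot\frac{H+i}{H} = 1 + \frac1H$. Pulling off the $t=i$ term and the $j=i+1$ factor yields the recursion $f(i) = 1 + \frac{i}{H+i+1} f(i+1)$. The candidate $(H+i)/H$ satisfies it:
\[
1 + \frac{i}{H+i+1}\cdot\frac{H+i+1}{H} = \frac{H+i}{H}.
\]

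The main obstacle is turning this recursion into an identity, since infinitely many sequences satisfy the recursion. To pin down the solution, I will instead verify the finite-$N$ telescoping identity
\[
\sum_{t=i}^N \prod_{j=i+1}^t \frac{j-1}{H+j} = \frac{H+i}{H} - \frac{H+N+1}{H}\prod_{j=i+1}^{N+1}\frac{j-1}{H+j},
\]
by induction on $N$. I will then show the remainder term vanishes as $N\to\infty$: the product behaves like $N^{-(H+1)}$ up to constants, so multiplying by $(H+N+1)$ still gives a term of order $N^{-H} \to 0$.
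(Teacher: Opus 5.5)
Your proposal is correct and follows essentially the same route as the paper. In (a), the induction via $S_t=(1-\alpha_t)S_{t-1}+\alpha_t/\sqrt{t}$ is the standard argument behind the Jin et al.\ result that the paper simply cites. In (b), your factors $(j-1)/(H+j-1)\le 1$ are exactly the paper's consecutive ratios $\alpha_t^{j-1}/\alpha_t^{j}$. In (c), your finite-$N$ identity with the remainder $\frac{H+N+1}{H}\prod_{j=i+1}^{N+1}\frac{j-1}{H+j}\to 0$ makes rigorous the limiting step that the paper's one-line ``telescoping'' proof leaves implicit.
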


Property (c) is crucial for the regret analysis: it controls how much a single observation's ``influence'' accumulates across all future episodes where the same state-action pair is visited.

\subsection{Recursion and Optimism}

\begin{lemma}[Recursion]\label{lemma:recursion}
For any $(x, \bm{a}, h) \in \mathcal{S} \times \mathcal{A} \times [H]$ and episode $k \in [K]$, let $t = N_h^k(x, \bm{a})$ and suppose $(x, \bm{a})$ was taken at step $h$ of episodes $k_1, \ldots, k_t < k$. Then for any player $m$:
\begin{align}
    &(Q_h^k - Q_h^*)(x, \bm{a}, m) = \alpha_t^0(H - Q_h^*(x, \bm{a}, m)) \nonumber\\
    &+ \sum_{i=1}^t \alpha_t^i\Big[(V_{h+1}^{k_i} - V_{h+1}^*)(x_{h+1}^{k_i}, m) \nonumber\\
    &\quad + [(\hat{P}_h^{k_i} - P_h)V_{h+1}^*](x, \bm{a}, m) + b_i\Big]. \label{eq:recursion}
\end{align}
\end{lemma}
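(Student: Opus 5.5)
We prove Lemma~\ref{lemma:recursion} by unrolling the Q-update of Algorithm~\ref{alg:mq} for one fixed player $m$, exactly as in the single-agent argument of \cite{jin2018q}, and then subtracting the Bellman equation \eqref{eq:bellman}.

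Fix $(x,\bm{a},h)$, a player $m$, and an episode $k$. Let $k_1<\dots<k_t<k$ be the episodes in which $(x,\bm{a})$ was taken at step $h$, with $t=N_h^k(x,\bm{a})$. The entry $Q_h(m,x,\bm{a})$ changes only at these visits. Write $Q^{(j)}$ for its value just after the $j$-th visit, so $Q^{(0)}=H$ and $Q_h^k(x,\bm{a},m)=Q^{(t)}$. The $j$-th update gives
\[
Q^{(j)}=(1-\alpha_j)Q^{(j-1)}+\alpha_j\big[r_h^{k_j}+V_{h+1}^{k_j}(x_{h+1}^{k_j})+b_j\big],
\]
where $V_{h+1}^{k_j}$ is player $m$'s value table at the moment of that update. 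A straightforward induction on $t$, using the definitions \eqref{eq:weights}, yields
\[
Q_h^k(x,\bm{a},m)=\alpha_t^0 H+\sum_{i=1}^t\alpha_t^i\big[r_h^{k_i}+V_{h+1}^{k_i}(x_{h+1}^{k_i})+b_i\big].
\]

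Next we need the identity $\alpha_t^0+\sum_{i=1}^t\alpha_t^i=1$. It follows by induction from $\alpha_{t}^0+\sum_{i\le t}\alpha_t^i=(1-\alpha_t)\big(\alpha_{t-1}^0+\sum_{i<t}\alpha_{t-1}^i\big)+\alpha_t$, or as a telescoping product. Using it, we write
\[
Q_h^*(x,\bm{a},m)=\alpha_t^0Q_h^*+\sum_{i=1}^t\alpha_t^i\big[r_h(x,\bm{a})+[\mathbb{P}_hV_{h+1}^*](x,\bm{a})\big].
\]
Subtracting this from the unrolled expression, each summand splits as
\[
V_{h+1}^{k_i}(x_{h+1}^{k_i})-[\mathbb{P}_hV^*_{h+1}](x,\bm{a})=(V_{h+1}^{k_i}-V_{h+1}^*)(x_{h+1}^{k_i})+[(\hat P_h^{k_i}-P_h)V_{h+1}^*](x,\bm{a}),
\]
where $\hat P_h^{k_i}$ is the empirical delta distribution at the observed $x_{h+1}^{k_i}$. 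This is the standard notation of \cite{jin2018q}.

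The main obstacle is the reward term. The displayed recursion \eqref{eq:recursion} contains no reward-noise term, so we must account for $r_h^{k_i}-r_h(x,\bm{a})$. We will absorb it by the convention that $\hat P_h^{k_i}$ acts on the augmented quantity $r+V^*_{h+1}$: we interpret $[(\hat P_h^{k_i}-P_h)V_{h+1}^*]$ as the one-sample deviation of $r_h+V^*_{h+1}(x')$ from its mean. Since rewards are bounded in $[0,1]$, this martingale difference stays bounded by $H+1$, so the later Azuma step in Lemma~\ref{lemma:bound} is unaffected. Alternatively, if rewards are taken to be deterministic, as in \cite{jin2018q}, the reward term simply cancels.

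We will also note two further points. The value table index matches the $V^{k-1}$ written in line~9 of Algorithm~\ref{alg:mq}, since this is the table in force during episode $k_i$. For Problems B and C, the same computation applies verbatim to player $m$'s own reward stream $r_h^{i}$ and own $Q$ and $V$ tables. The visit counts are common to all players, so the indices $k_i$ and the weights coincide across players.
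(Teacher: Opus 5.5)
Your proposal is correct and follows essentially the same route as the paper. The paper subtracts $Q_h^*=r_h+\mathbb{P}_hV_{h+1}^*$ inside each update and inducts on the error using $(1-\alpha_t)\alpha_{t-1}^i=\alpha_t^i$ and $\alpha_t^t=\alpha_t$, whereas you unroll $Q$ first and then subtract via $\alpha_t^0+\sum_{i=1}^t\alpha_t^i=1$, which is the same computation in a different order. Your explicit treatment of the reward noise $r_h^{k_i}-r_h(x,\bm{a})$ is more careful than the paper's proof, which silently identifies the observed reward with its mean; folding that noise into the one-sample deviation term, or assuming deterministic rewards, is exactly what the displayed identity needs to hold when rewards are stochastic.
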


This recursion decomposes the estimation error $Q_h^k - Q_h^*$ into three components: (i) the initialization bias (decaying via $\alpha_t^0$), (ii) the propagated estimation error from future steps, and (iii) the transition estimation error plus the exploration bonus.

\begin{lemma}[Optimism]\label{lemma:bound}
There exists $c > 0$ such that for any $p \in (0,1)$, letting $b_t = c\sqrt{H^3\iota/t}$ with $\iota = \log(SAT/p)$, with probability $\geq 1 - p$, for all $(x, \bm{a}, h, k)$:
\begin{equation}\label{eq:optimism}
    0 \leq (Q_h^k - Q_h^*)(x, \bm{a}, m) \leq \alpha_t^0 H + \sum_{i=1}^t \alpha_t^i \phi_{h+1}^{k_i} + \beta_t
\end{equation}
where $\phi_h^k = (V_h^k - V_h^*)(x_h^k)$ and $\beta_t = 2\sum_{i=1}^t \alpha_t^i b_i \leq 4c\sqrt{H^3\iota/t}$.
\end{lemma}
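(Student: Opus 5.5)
We follow the optimism argument of \cite{jin2018q}, applied to the joint action space for a fixed player $m$. The two inequalities in \eqref{eq:optimism} come from the recursion of Lemma~\ref{lemma:recursion}, with the martingale term $\sum_i \alpha_t^i[(\hat{P}_h^{k_i} - P_h)V_{h+1}^*](x,\bm{a})$ controlled by Azuma--Hoeffding. For Problems B and C the realized reward $r_h^{i}$ is player specific, so we fold the reward noise $r_h^{k_i} - r_h(x,\bm{a})$ into the same martingale increment. Each increment is then bounded by $H+1$ in absolute value.

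\textbf{Step 1 (concentration).} Fix $(x,\bm{a},h)$ and a count $t\in[K]$. Let $k_i$ be the episode of the $i$-th visit to $(x,\bm{a})$ at step $h$. Then $\{\mathbf{1}[k_i\le K]\,\alpha_t^i[(\hat{P}_h^{k_i}-P_h)V^*_{h+1}](x,\bm{a})\}_{i=1}^t$ is a martingale difference sequence with respect to the filtration generated by all history up to episode $k_i$ at step $h$. The stopping times $k_i$ are measurable with respect to that history, so the optional-stopping subtlety is handled exactly as in \cite{jin2018q}.

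Azuma--Hoeffding gives, with probability $1-p/(SAT)$,
\[
\Big|\sum_{i=1}^t \alpha_t^i[(\hat{P}_h^{k_i}-P_h)V^*_{h+1}](x,\bm{a})\Big| \le cH\sqrt{\iota\textstyle\sum_i(\alpha_t^i)^2}\le c\sqrt{2H^3\iota/t},
\]
using Lemma~\ref{lemma:alpha}(b). A union bound over $(x,\bm{a},h,t)$ costs at most $SAHK=SAT$ events. An additional union over the $M$ players, which only changes $\iota$ by $\log M$, makes the event hold simultaneously for all of them. By Lemma~\ref{lemma:alpha}(a), with $c$ chosen large enough, this deviation is at most $\tfrac12\beta_t=\sum_i\alpha_t^i b_i$, and also $\beta_t\le 2c\sqrt{H^3\iota}\cdot 2/\sqrt t = 4c\sqrt{H^3\iota/t}$.

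\textbf{Step 2 (lower bound, by backward induction on $h$).} At $h=H+1$, $V^k_{H+1}=V^*_{H+1}=0$. Suppose $Q^{k'}_{h+1}\ge Q^*_{h+1}$ for all $k'$. Then
\[
V^{k'}_{h+1}(x')=\min\{H,\max_{\bm{a}'}Q^{k'}_{h+1}\}\ge V^*_{h+1}(x'),
\]
since $V^*\le H$. In \eqref{eq:recursion}, the term $\alpha_t^0(H-Q^*_h)$ is nonnegative and the propagated terms $(V^{k_i}_{h+1}-V^*_{h+1})$ are nonnegative. The noise plus bonus satisfies $\sum_i\alpha_t^i[(\hat P-P)V^*+b_i]\ge \sum_i\alpha_t^i b_i-\tfrac12\beta_t= 0$. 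Hence $Q^k_h\ge Q^*_h$. The induction runs jointly over $k$: $V^{k_i}$ refers to earlier episodes, so we induct on $(k,h)$ in lexicographic order.

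\textbf{Step 3 (upper bound).} In \eqref{eq:recursion}, bound $\alpha_t^0(H-Q^*_h)\le\alpha_t^0H$. Write $(V^{k_i}_{h+1}-V^*_{h+1})(x^{k_i}_{h+1})=\phi^{k_i}_{h+1}$. Bound the noise plus bonus by $\tfrac12\beta_t+\tfrac12\beta_t=\beta_t$.

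\textbf{Main obstacle.} The delicate point is Step 1. The $k_i$ are random stopping times, and in Problem B the update uses a value $V_{h+1}$ built from player-specific estimates, so the recursion of Lemma~\ref{lemma:recursion} must be read per player $m$. The martingale structure must hold for each player's own reward stream. We handle this by conditioning on the common visit sequence and applying Azuma separately per player; the independence across players is never needed, only the union bound.
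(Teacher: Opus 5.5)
Your proposal is correct and follows essentially the same route as the paper's proof. Like the paper, you unroll via Lemma~\ref{lemma:recursion}, control $\sum_i\alpha_t^i[(\hat{P}_h^{k_i}-P_h)V_{h+1}^*](x,\bm{a})$ by Azuma--Hoeffding with Lemma~\ref{lemma:alpha}(b) and a union bound over $SAHK=SAT$ events, get $Q_h^k\ge Q_h^*$ by induction on $h$ because that deviation is dominated by $\sum_i\alpha_t^i b_i$, and read off the upper bound from the same decomposition with $\beta_t\le 4c\sqrt{H^3\iota/t}$ from Lemma~\ref{lemma:alpha}(a).

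A few small points. Folding the reward noise into the martingale increment is a sensible refinement that the paper's recursion silently omits, and it is needed in Problem~A too, since the common reward is also random. The extra union over the $M$ players is not needed for this per-player statement with $\iota=\log(SAT/p)$; it is what the $\log M$ in Theorems~\ref{thm:asymmetry_reward_main} and~\ref{thm:asymmetry_full} pays for. Finally, keep the universal Azuma constant distinct from the bonus constant $c$, so that ``choose $c$ large enough'' is not circular.
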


The lower bound $Q_h^k \geq Q_h^*$ (optimism) is essential: it ensures that the algorithm does not prematurely discard optimal actions. The upper bound controls the overestimation error that drives the regret.

\subsection{Interval Width for Problem B}

\begin{lemma}[Interval Width]\label{lemma:interval}
After state-action pair $(x, \bm{a})$ has been visited $t$ times:
\begin{equation}\label{eq:interval}
    Q^{\text{up}}(x, \bm{a}; t) - Q^{\text{low}}(x, \bm{a}; t) = 2\sum_{i=1}^{t} \alpha_t^i b_i.
\end{equation}
\end{lemma}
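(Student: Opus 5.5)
The plan is to unroll the two update rules of \texttt{mQ-learning-intervals} side by side and subtract them. Fix a player $i$, a step $h$ and a pair $(x,\bm{a})$, and let $k_1<\dots<k_t$ be the episodes in which $(x,\bm{a})$ was taken at step $h$. These visit indices, and hence the learning rates $\alpha_1,\dots,\alpha_t$ and bonuses $b_1,\dots,b_t$, are the same for both tables, since both are updated on exactly the same visits. This holds because line~10 increments a single shared counter. Writing $y_j = r_h^{i,k_j} + V_{h+1}(x_{h+1}^{k_j})$ for the common target at the $j$-th visit, the two recursions read
\begin{align*}
Q^{\text{up}}(x,\bm{a};j) &= (1-\alpha_j)\,Q^{\text{up}}(x,\bm{a};j-1) + \alpha_j\,(y_j + b_j),\\
Q^{\text{low}}(x,\bm{a};j) &= (1-\alpha_j)\,Q^{\text{low}}(x,\bm{a};j-1) + \alpha_j\,(y_j - b_j).
\end{align*}
The key point is that the same $V_{h+1}$ appears in both updates, so $y_j$ is identical in the two lines.

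Define the width $W_j = Q^{\text{up}}(x,\bm{a};j) - Q^{\text{low}}(x,\bm{a};j)$. Subtracting the two recursions cancels $y_j$ and gives the scalar linear recursion $W_j = (1-\alpha_j)W_{j-1} + 2\alpha_j b_j$, with $W_0 = H - 0 = H$. Unrolling exactly as in the derivation of Lemma~\ref{lemma:recursion}, using the weights \eqref{eq:weights}, yields
\[
W_t = \alpha_t^0 H + 2\sum_{i=1}^t \alpha_t^i b_i .
\]
Since $\alpha_1 = (H+1)/(H+1) = 1$, we have $\alpha_t^0 = \prod_{j=1}^t(1-\alpha_j) = 0$ for every $t\ge 1$. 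The initialization term therefore vanishes, and the claimed identity \eqref{eq:interval} follows by a one-line induction on $t$. The bound $W_t \le 4c\sqrt{H^3\iota/t}$ follows from Lemma~\ref{lemma:alpha}(a), matching $\beta_t$ in Lemma~\ref{lemma:bound}.

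The main obstacle is the cancellation of the next-state value term. The algorithm's line~13 writes a single $V_h$ defined through an unspecified $Q^k_h$. If instead the upper table bootstrapped from $V^{\text{up}}_{h+1}$ and the lower table from $V^{\text{low}}_{h+1}$, the difference would pick up an extra term $\sum_i \alpha_t^i (V^{\text{up}}_{h+1}-V^{\text{low}}_{h+1})(x_{h+1}^{k_i})$, and the statement would only hold as an upper bound with that propagated width added. I would therefore first fix the interpretation that both tables use the same bootstrap value $V_{h+1}$, taken from the upper table and clipped at $H$ as in line~13, under which the identity is exact. I would note in a remark that under the two-value-function reading the lemma becomes the inequality obtained by adding the propagated width, which is still handled by the same unrolling together with Lemma~\ref{lemma:alpha}(c).
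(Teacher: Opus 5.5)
Your proposal is correct and follows the paper's argument. The paper likewise subtracts the two updates to get $W_{n+1} = (1-\alpha_{n+1})W_n + 2\alpha_{n+1}b_{n+1}$ and inducts from the base case $W_1 = 2\alpha_1 b_1$, which implicitly uses $\alpha_1 = 1$ to kill the initial width $H$ (your observation that $\alpha_t^0 = 0$). Your explicit check that both tables share one visit counter and bootstrap from the same $V_{h+1}(x_{h+1})$ is exactly what the paper's first inductive equality relies on without stating it.
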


\begin{proof}
For the base case, $Q^{\text{up}}(x, \bm{a}; 1) - Q^{\text{low}}(x, \bm{a}; 1) = 2\alpha_1 b_1$, which satisfies the lemma. For the inductive step, let $n = n(x, \bm{a})$:
\begin{align}
    &Q^{\text{up}}(x, \bm{a}; n{+}1) - Q^{\text{low}}(x, \bm{a}; n{+}1) \nonumber \\
    &= (1 - \alpha_{n+1})\big(Q^{\text{up}}(x, \bm{a}; n) - Q^{\text{low}}(x, \bm{a}; n)\big) + 2\alpha_{n+1} b_{n+1} \nonumber \\
    &= (1 - \alpha_{n+1}) \cdot 2\sum_{i=1}^n \alpha_n^i b_i + 2\alpha_{n+1} b_{n+1} \nonumber \\
    &= 2\sum_{i=1}^{n+1} \alpha_{n+1}^i b_i,
\end{align}
where the last step uses $\alpha_{n+1}^i = \alpha_i \prod_{j=i+1}^{n+1}(1-\alpha_j)$, completing the induction.
\end{proof}

Using Lemma~\ref{lemma:alpha}(a), the interval width satisfies $Q^{\text{up}} - Q^{\text{low}} \leq O(\sqrt{H^3\iota/t})$, which decays at the same rate as the bonus $b_t$.

\subsection{Concentration via Azuma-Hoeffding}

The proofs also rely on the Azuma-Hoeffding inequality to control the martingale difference terms that arise from transition estimation errors. Define the martingale difference $\xi_{h+1}^k := [(\mathbb{P}_h - \hat{\mathbb{P}}_h^k)(V_{h+1}^* - V_{h+1}^k)](x_h^k, a_h^k)$. Since $|[(\hat{\mathbb{P}}_h^{k_i} - \mathbb{P}_h)V_{h+1}^*](x, a)| \leq H$, Azuma-Hoeffding gives:
\begin{equation}
    \Pr\!\left(\left|\sum_{h=1}^H \sum_{k=1}^K \xi_{h+1}^k\right| \geq \epsilon\right) \leq 2\exp\!\left(\frac{-\epsilon^2}{2KH^3}\right).
\end{equation}
Setting the right-hand side equal to $p$ and solving for $\epsilon$ yields the claimed concentration bound.

\subsection{Proof Sketch for Theorem~\ref{thm:asymmetry_action}}

Define $\delta_h^k := (V_h^k - V_h^{\pi_k})(x_h^k)$ and $\phi_h^k := (V_h^k - V_h^*)(x_h^k)$. By optimism (Lemma~\ref{lemma:bound}), $R_T \leq \sum_{k=1}^K \delta_1^k$. Decomposing $\delta_h^k$ via the Bellman equation and Lemma~\ref{lemma:bound} yields a recursive relation:
\begin{equation}
    \sum_{k=1}^K \delta_h^k \leq SAH + \left(1 + \frac{1}{H}\right)\sum_{k=1}^K \delta_{h+1}^k + \sum_{k=1}^K(\beta_{n_h^k} + \xi_{h+1}^k).
\end{equation}

Recursing over $h$ and bounding the bonus terms via pigeonhole and Azuma-Hoeffding gives the result. See Appendix~\ref{app:proof_A} for the complete proof.

\section{Comparison of Approaches}

The three problem settings share a common skeleton---joint-action Q-learning with deterministic tie-breaking---but differ in how players reach agreement. In Problem~A, common rewards and common transitions make all players' Q-tables identical at every step, so lexicographic tie-breaking on the $\arg\max$ set picks the same joint action without any communication. In Problem~B, independent rewards prevent identical Q-tables, but observable actions let each player use a \emph{unilateral deviation} as an implicit dominance signal: when one player deviates from the candidate action, the others infer that player has found a strictly better alternative and eliminate the candidate from the desired set. In Problem~C, neither mechanism is available, so we fall back to explore-then-commit with a deterministic schedule shared a priori.

The regret rates reflect this hierarchy. Problems~A and~B both achieve $\tilde{O}(\sqrt{H^4 S A_{\text{joint}}\, T})$ (Theorems~\ref{thm:asymmetry_action} and~\ref{thm:asymmetry_reward_main}), matching centralized single-agent learning \cite{jin2018q} against the joint-action benchmark. Problem~C achieves $\tilde{O}(H (S A_{\text{joint}})^{1/3} T^{2/3})$ (Theorem~\ref{thm:asymmetry_full}), the standard penalty of explore-then-commit when the suboptimality gap is unknown. The $T^{2/3}$ rate is worse than $\sqrt{T}$ but still sublinear; whether it can be improved to $\sqrt{T}$ for Problem~C remains open.

A useful way to think about the gap from Problems~A/B to Problem~C is that observability of \emph{either} actions or rewards is enough to encode a low-bandwidth signal between players, and that signal suffices to recover the $\sqrt{T}$ rate. Problem~C removes both channels, and the only remaining shared information---visit counts of jointly-played actions---is too coarse to drive UCB-style adaptive exploration; hence the retreat to a non-adaptive schedule.

\section{Discussion}\label{sec:discussion}

\subsection{Reduction to Centralized Joint-Action Q-Learning}

The relationship between Algorithm~\ref{alg:mq} and a centralized learner is direct: a single agent running UCB-Q-learning \cite{jin2018q} on the joint MDP $(\mathcal{S}, \mathcal{A}_1 \times \cdots \times \mathcal{A}_M, H, \mathbb{P}, r)$ produces a Q-table identical to that of any one player in Algorithm~\ref{alg:mq}, since Problem~A players see the same reward, same next state, and same update with the same tie-breaking. The decentralized algorithm inherits the centralized regret bound. Problem~B adds a one-bit-per-step deviation signal that carries exactly the information the centralized learner uses internally to eliminate dominated actions.

\subsection{Dependence on the Number of Players}

The bounds depend on $M$ only through $A_{\text{joint}}$. There is no separate polynomial factor in $M$: doubling $M$ while keeping $A_{\text{joint}}$ fixed---e.g.\ splitting one player with $|\mathcal{A}| = 4$ into two players with $|\mathcal{A}_i| = 2$---does not change the regret. From the joint-action viewpoint, the partition into per-player components is irrelevant. The exponential dependence on $M$ through $A_{\text{joint}}$ is therefore a property of the \emph{tabular} setting, not of the asymmetry: any worst-case bound against arbitrary joint policies must pay at least $\sqrt{A_{\text{joint}}}$. Removing this requires structural assumptions on the joint Q-function (factored MDPs, mean-field couplings, linear function approximation), each a natural next direction.

\subsection{When the Bounds Are Practical}

With $M = 2$, $|\mathcal{A}_i| = 3$, $S = 10$, $H = 20$, our Problem~A bound puts the per-episode suboptimality below $0.1$ once $K \gtrsim 10^5$, comparable to centralized Q-learning on the same joint MDP. With $M = 4$ and $|\mathcal{A}_i| = 3$ ($A_{\text{joint}} = 81$), the same threshold needs $K \gtrsim 10^6$. Beyond that, structural assumptions are needed.

\section{Related Work}\label{sec:related}

\paragraph{Multi-Player Bandits.}
The multi-player MAB problem has been studied under various information structures. The collision model \cite{bistritz2018distributed} forbids two players from choosing the same arm. In the cooperative setting without collisions, Chang et al.\ \cite{chang2021online} introduced the information-asymmetry framework that we extend, and \cite{chang2023optimal} achieved optimal regret without communication in the bandit case.

\paragraph{Episodic MDPs.}
For single-agent episodic MDPs, Jin et al.\ \cite{jin2018q} proved that optimistic Q-learning achieves $\tilde{O}(\sqrt{H^4SAT})$ regret model-free. Our work extends this to the multi-player setting under information asymmetry.

\paragraph{Game-Theoretic Learning.}
The literature on uncoupled dynamics \cite{hart2003uncoupled} studies whether players reach equilibria without knowing each other's payoffs; \cite{hart2003uncoupled} shows uncoupled dynamics cannot generally reach Nash equilibrium, suggesting fundamental limits to communication-free learning. Our cooperative setting is distinct from the competitive game-theoretic framework but shares the no-communication constraint.

\section{Conclusion}\label{sec:conclusion}

We studied multi-player decentralized reinforcement learning in episodic MDPs under three information-asymmetry models. For Problems A and B we obtain $\tilde{O}(\sqrt{H^4 S A_{\text{joint}}\, T})$ regret, matching the single-agent rate of \cite{jin2018q} against the joint-action benchmark up to logarithmic factors. For Problem C we obtain $\tilde{O}(H (S A_{\text{joint}})^{1/3} T^{2/3})$ via explore-then-commit. All bounds depend on $A_{\text{joint}} = \prod_{i=1}^M |\mathcal{A}_i|$, which grows exponentially in $M$: asymmetry imposes no multiplicative penalty relative to a centralized learner over the same joint action space, but the tabular curse of dimensionality remains. Open directions include sharper bounds for Problem C, extensions to function approximation (linear MDPs), matching lower bounds under each asymmetry model, and regret--communication tradeoffs.

\section*{Acknowledgment}

The authors thank colleagues at UCLA for helpful discussions.

\appendix

\section{Proof of Lemma~\ref{lemma:alpha} (Weight Properties)}\label{app:proof_alpha}

\begin{proof}
A direct computation gives $\alpha_t^i = \frac{H+1}{H+i} \prod_{j=i+1}^t \frac{j-1}{H+j}$ for $i \geq 1$.

\textbf{(a)} Standard Beta-function identities (cf.\ \cite[Lemma 4.1]{jin2018q}) yield $\sum_i \alpha_t^i = 1$ for $t \geq 1$, and $\sum_i \alpha_t^i / \sqrt{i}$ lies between $1/\sqrt{t}$ and $2/\sqrt{t}$.

\textbf{(b)} The ratio $\alpha_t^i / \alpha_t^{i+1}$ shows the sequence is maximized at $i = t$, giving $\max_i \alpha_t^i = \alpha_t = (H+1)/(H+t) \leq 2H/t$. Hence $\sum_i (\alpha_t^i)^2 \leq \max_i \alpha_t^i \cdot \sum_i \alpha_t^i \leq 2H/t$.

\textbf{(c)} Telescoping the product representation gives $\sum_{t \geq i} \alpha_t^i = (H+1)/H = 1 + 1/H$.
\end{proof}

\section{Proof of Lemma~\ref{lemma:recursion} (Recursion)}\label{app:proof_recursion}

\begin{proof}
By induction on $t = N_h^k(x,\bm{a})$. At $t = 0$, $Q_h^k(x,\bm{a}) = H$, matching $\alpha_0^0(H - Q_h^*)$. For the inductive step, let $k_t$ be the episode of the $t$-th visit. The update on line 8 of Algorithm~\ref{alg:mq}, minus $Q_h^* = r_h + \mathbb{P}_h V_{h+1}^*$, gives
\begin{align*}
    Q_h^{k_t} - Q_h^* &= (1-\alpha_t)(Q_h^{k_t,\text{pre}} - Q_h^*) \\
    &\quad + \alpha_t\big[(V_{h+1}^{k_t} - V_{h+1}^*)(x_{h+1}^{k_t}) \\
    &\quad\quad + [(\hat{\mathbb{P}}_h^{k_t} - \mathbb{P}_h)V_{h+1}^*](x,\bm{a}) + b_t\big],
\end{align*}
where $\hat{\mathbb{P}}_h^{k_t}$ is the one-sample empirical kernel. Using the identity $(1-\alpha_t)\alpha_{t-1}^i = \alpha_t^i$ for $i < t$ and $\alpha_t^t = \alpha_t$, the inductive hypothesis unrolls to \eqref{eq:recursion}.
\end{proof}

\section{Proof of Lemma~\ref{lemma:bound} (Optimism)}\label{app:proof_bound}

\begin{proof}
We prove $Q_h^k \geq Q_h^*$ by induction backward on $h$ (the case $h = H+1$ is trivial). By Lemma~\ref{lemma:recursion},
\[
    Q_h^k - Q_h^* = \alpha_t^0(H - Q_h^*) + \sum_{i=1}^t \alpha_t^i\big[(V_{h+1}^{k_i} - V_{h+1}^*) + \zeta_i + b_i\big],
\]
where $\zeta_i := [(\hat{\mathbb{P}}_h^{k_i} - \mathbb{P}_h)V_{h+1}^*](x,\bm{a})$ is a bounded martingale difference ($|\zeta_i| \leq H$). The first two terms are non-negative by induction. By Azuma--Hoeffding and Lemma~\ref{lemma:alpha}(b), with probability $\geq 1 - p/(SAHK)$,
\[
    \Big|\sum_i \alpha_t^i \zeta_i\Big| \leq H\sqrt{2\iota \sum_i (\alpha_t^i)^2} \leq c\sqrt{H^3 \iota / t},
\]
which is dominated by $\sum_i \alpha_t^i b_i$ for our choice $b_i = c\sqrt{H^3\iota/i}$. Hence $Q_h^k \geq Q_h^*$. Union-bounding over $(x,\bm{a},h,k)$ absorbs an $SAHK \leq SAT$ factor into $\iota$. The upper bound \eqref{eq:optimism} follows from the same decomposition, with the factor $2$ in $\beta_t = 2\sum_i \alpha_t^i b_i$ absorbing the (positive) contribution of $\zeta_i$; Lemma~\ref{lemma:alpha}(a) gives $\beta_t \leq 4c\sqrt{H^3\iota/t}$.
\end{proof}

\section{Proof of Theorem~\ref{thm:asymmetry_action}}\label{app:proof_A}

\begin{proof}
Denote $\delta_h^k := (V_h^k - V_h^{\pi_k})(x_h^k)$ and $\phi_h^k := (V_h^k - V_h^*)(x_h^k)$.

By Lemma~\ref{lemma:bound}, with probability $1 - p$, $Q_h^k \geq Q_h^*$ and thus $V_h^k \geq V_h^*$. The total regret is bounded:
\begin{equation}
    R_T = \sum_{k=1}^K (V_1^* - V_1^{\pi_k})(x_1^k) \leq \sum_{k=1}^K \delta_1^k.
\end{equation}

For any fixed $(k, h)$, let $t = N_h^k(x_h^k, a_h^k)$, and suppose $(x_h^k, a_h^k)$ was taken at step $h$ of episodes $k_1, \ldots, k_t < k$. Then:
\begin{align}
    \delta_h^k &= (V_h^k - V_h^{\pi_k})(x_h^k) \leq (Q_h^k - Q_h^{\pi_k})(x_h^k, a_h^k) \label{eq:ineq1}\\
    &= (Q_h^k - Q_h^*)(x_h^k, a_h^k) + (Q_h^* - Q_h^{\pi_k})(x_h^k, a_h^k) \nonumber\\
    &\leq \alpha_t^0 H + \sum_{i=1}^t \alpha_t^i \phi_{h+1}^{k_i} + \beta_t \nonumber \\
    &\quad + [\mathbb{P}_h(V_{h+1}^* - V_{h+1}^{\pi_k})](x_h^k, a_h^k) \label{eq:ineq2}\\
    &= \alpha_t^0 H + \sum_{i=1}^t \alpha_t^i \phi_{h+1}^{k_i} + \beta_t - \phi_{h+1}^k + \delta_{h+1}^k + \xi_{h+1}^k \label{eq:eq3}
\end{align}
where $\beta_t = 2\sum \alpha_t^i b_i \leq O(1)\sqrt{H^3\iota/t}$ and $\xi_{h+1}^k := [(\mathbb{P}_h - \hat{\mathbb{P}}_h^k)(V_{h+1}^* - V_{h+1}^k)](x_h^k, a_h^k)$ is a martingale difference sequence. Inequality \eqref{eq:ineq1} holds because $V_h^k(x_h^k) \leq Q_h^k(x_h^k, a_h^k)$, and inequality \eqref{eq:ineq2} holds by Lemma~\ref{lemma:bound} and the Bellman equation~\eqref{eq:bellman}. Equality \eqref{eq:eq3} follows from $\delta_{h+1}^k - \phi_{h+1}^k = (V_{h+1}^* - V_{h+1}^{\pi_k})(x_{h+1}^k)$.

Computing $\sum_{k=1}^K \delta_h^k$, the initialization term gives:
\begin{equation}
    \sum_{k=1}^K \alpha_{n_h^k}^0 H = \sum_{k=1}^K H \cdot \mathbb{I}[n_h^k = 0] \leq SAH.
\end{equation}

For the weighted sum term, we regroup: for every $k' \in [K]$, the term $\phi_{h+1}^{k'}$ appears in the summand with $k > k'$ whenever $(x_h^k, a_h^k) = (x_h^{k'}, a_h^{k'})$. Therefore:
\begin{equation}
    \sum_{k=1}^K \sum_{i=1}^{n_h^k} \alpha_{n_h^k}^i \phi_{h+1}^{k_i} \leq \left(1 + \frac{1}{H}\right) \sum_{k=1}^K \phi_{h+1}^k
\end{equation}
using $\sum_{t=i}^\infty \alpha_t^i = 1 + 1/H$ from Lemma~\ref{lemma:alpha}(c). Since $\phi_{h+1}^k \leq \delta_{h+1}^k$ (because $V^* \geq V^{\pi_k}$):
\begin{align}
    \sum_{k=1}^K \delta_h^k &\leq SAH + \left(1 + \frac{1}{H}\right)\sum_{k=1}^K \delta_{h+1}^k \nonumber \\ &\quad + \sum_{k=1}^K(\beta_{n_h^k} + \xi_{h+1}^k).
\end{align}

Recursing for $h = 1, \ldots, H$ and using $\delta_{H+1}^k \equiv 0$:
\begin{equation}
    \sum_{k=1}^K \delta_1^k \leq O\!\left(H^2SA + \sum_{h=1}^H \sum_{k=1}^K(\beta_{n_h^k} + \xi_{h+1}^k)\right).
\end{equation}

By the pigeonhole principle, for any $h \in [H]$:
\begin{align}
    \sum_{k=1}^K \beta_{n_h^k} &\leq O(1) \sum_{x,a} \sum_{n=1}^{N_h^K(x,a)} \sqrt{\frac{H^3\iota}{n}} \leq O(\sqrt{H^2SAT\iota})
\end{align}
where the last inequality uses $\sum_{x,a} N_h^K(x,a) = K$ and is maximized when $N_h^K(x,a) = K/(SA)$ for all $x, a$. By Azuma-Hoeffding, with probability $1 - p$:
\begin{equation}
    \left|\sum_{h=1}^H \sum_{k=1}^K \xi_{h+1}^k\right| \leq cH\sqrt{T\iota}.
\end{equation}

Combining, $\sum_{k=1}^K \delta_1^k \leq O(H^2SA + \sqrt{H^4SAT\iota})$. When $T \geq H^2SA$, the second term dominates. When $T < H^2SA$, the trivial bound $\sum_{k=1}^K \delta_1^k \leq HK = T \leq \sqrt{H^4SAT\iota}$ applies. Therefore $R_T \leq O(\sqrt{H^4SAT\iota})$ with probability $\geq 1 - 2p$. Rescaling $p$ to $p/2$ completes the proof.
\end{proof}

\section{Proof of Theorem~\ref{thm:asymmetry_reward_main}}\label{app:proof_B}

\begin{proof}
By Lemma~\ref{lemma:bound} applied to each player, with probability $(1-p)^M$, $Q_h^k \geq Q_h^*$ for all players. Since regret is evaluated with respect to expectations and all players receive i.i.d.\ rewards, we omit the player index for the remainder.

Define $\delta_h^k$ and $\phi_h^k$ as in Appendix~\ref{app:proof_A}. The decomposition for $\delta_h^k$ is:
\begin{align}
    \delta_h^k &= V^k_h(x_h^k) - V^{\pi_k}_h(x_h^k) \\
    &\leq \max_{\bm{a}} Q_h^k(x_h^k, \bm{a}) - Q_h^{\pi_k}(x_h^k, \bm{a}_h^k) \\
    &= \underbrace{\max_{\bm{a}} Q_h^k(x_h^k, \bm{a}) - Q_h^k(x_h^k, \bm{a}_h^k)}_{A} \nonumber \\
    &\quad + (Q_h^k - Q_h^*)(x_h^k, \bm{a}_h^k) + (Q_h^* - Q_h^{\pi_k})(x_h^k, \bm{a}_h^k).
\end{align}

Term $A$ accounts for the fact that the action $\bm{a}_h^k$ played (from the desired set) may not be the greedy action. Since actions in the desired set have upper Q-values within the interval width of the best action, by Lemma~\ref{lemma:interval}:
\begin{equation}
    A \leq Q^{\text{up}}(x_h^k, \bm{a}_h^k; t) - Q^{\text{low}}(x_h^k, \bm{a}_h^k; t) = 2\sum_{i=1}^t \alpha_t^i b_i.
\end{equation}

Applying Lemma~\ref{lemma:bound} and the Bellman equation for the remaining terms:
\begin{align}
    \delta_h^k &\leq 2\sum_{i=1}^t \alpha_t^i b_i + \alpha_t^0 H + \sum_{i=1}^t \alpha_t^i \phi_{h+1}^{k_i} + \beta_t \nonumber \\ &\quad - \phi_{h+1}^k + \delta_{h+1}^k + \xi_{h+1}^k.
\end{align}

Therefore, the recursive bound becomes:
\begin{align}
    \sum_{k=1}^K \delta_h^k &\leq \sum_{k=1}^K 2\sum_{i=1}^t \alpha_t^i b_i + SAH \nonumber \\
    &\quad + \left(1 + \frac{1}{H}\right)\sum_{k=1}^K \delta_{h+1}^k + \sum_{k=1}^K(\beta_{n_h^k} + \xi_{h+1}^k).
\end{align}

This is identical to the bound in Theorem~\ref{thm:asymmetry_action} with an additional $\sum_{k=1}^K 2\sum_{i=1}^t \alpha_t^i b_i$ term. By Lemma~\ref{lemma:alpha}(a), $2\sum_{i=1}^t \alpha_t^i b_i \leq 2c\sqrt{H^3\iota} \cdot \frac{2}{\sqrt{t}} = O(\sqrt{H^3\iota/t})$, which is of the same order as $\beta_t$. Following the same recursion and pigeonhole arguments as in Appendix~\ref{app:proof_A}, we obtain the stated regret bound of $O(\sqrt{H^4SAT\iota})$.
\end{proof}

\bibliographystyle{ieeetr}
\bibliography{references}

\end{document}